\begin{document}

\title{
    Belief revision in the propositional closure \\ of a qualitative algebra\footnote{
        This technical report constitutes an extended version of \citet{dufour2014_REVISOR_PCQA_short_version}.
    }}
\author[1,2,3]{Valmi Dufour-Lussier}
\author[1,2,3]{Alice Hermann}
\author[4]{\\Florence Le Ber}
\author[1,2,3]{Jean Lieber}
\affil[1]{
        Universit\'e de Lorraine, LORIA, UMR 7503
        \par
        54506 Vand\oe{}uvre-l\`es-Nancy, France
        \par
        \texttt{\textit{first-name}.\textit{surname}@loria.fr}
}
\affil[2]{ CNRS --- 54506 Vand\oe{}uvre-l\`es-Nancy, France}
\affil[3]{ Inria --- 54602 Villers-l\`es-Nancy, France}
\affil[4]{
        ICube -- Universit\'e de Strasbourg/ENGEES, CNRS
        \par
        67412 Illkirch, France
        \par
        \texttt{florence.leber@engees.unistra.fr}
}
\date{}
\maketitle

\begin{abstract}
  Belief revision is an operation that aims at modifying old beliefs so that they
   become consistent with new ones.
  The issue of belief revision has been studied in various formalisms, in particular,
   in qualitative algebras (QAs) in which the result is a disjunction of belief bases
   that is not necessarily representable in a QA.
  This motivates the study of belief revision in formalisms extending QAs,
   namely, their propositional closures:
   in such a closure, the result of belief revision belongs
   to the formalism.
  Moreover, this makes it possible to define a contraction operator thanks to the Harper
   identity.
  Belief revision in the propositional closure of QAs is studied, an algorithm
   for a family of revision operators is designed, and
   an open-source implementation is made freely available on the web.

\end{abstract}

\section{Introduction}

Belief revision is an operation of belief change that consists
 in modifying minimally old beliefs so that they become consistent
 with new beliefs~\citep{agm-85}.
One way to study this issue following a knowledge representation angle
 is to consider a formalism and to study some belief revision operators
 defined on it:
 how they are defined and how they can be implemented.

In particular, it is rather simple to define a revision operator on a
 qualitative algebra (such as the Allen algebra) by
 reusing the work of~\citet{condotta10} about the related issue of
 belief merging.
The result of such a belief revision is a set of belief bases to
 be interpreted disjunctively, and which is not necessarily representable
 as a sole belief base:
 qualitative algebras are not closed under disjunction.

This gives a first motivation for the study of belief revision in the
 propositional closure of a qualitative algebra:
 the revision operator in such a closure gives a result necessarily
 representable in the formalism.

The first section of the paper contains some preliminaries about various
 notions used throughout the paper;
 this section is rather long since it contains notions from which a big part
 of the rest of the paper ensues,
 e.g. propositional closure of a formalism, qualitative algebras, and
 belief revision based on distances.
%
Then, the paper describes some motivations about the study of belief revision
 in the propositional closure of a qualitative algebra.
The next section briefly describes some properties of such a formalism.
Finally, an algorithm and an implementation of this algorithm for a revision
 operator in the propositional closure of a qualitative algebra are
 presented with some detailed examples.

\section{Preliminaries}

\subsection{Main terminology and assumptions about knowledge representation formalisms}

A (knowledge representation) \emph{formalism} is a pair $(\logique, {\models})$
 where $\logique$ is a language and $\models$ is a binary relation on $\logique$.
A \emph{formula} $\varphi$ is an element of $\logique$.
$\models$ is called the entailment relation.
For $\varphi_1, \varphi_2\in\logique$,
 $\varphi_1\equiv\varphi_2$ means that
 $\varphi_1\models\varphi_2$ and $\varphi_2\models\varphi_1$
 and is read ``$\varphi_1$ and $\varphi_2$ are equivalent''.

The entailment relation of the formalisms used in this paper can always
 be characterized as follows---%
 according to a model-theoretic semantics with a class of interpretations
  that is a set:
It is assumed that there is a set $\INTERPRETATIONS$ whose elements are called the
 \emph{interpretations}.
There is a relation also denoted by $\models$ on $\INTERPRETATIONS\times\logique$.
If $\interpretationOmega\models\varphi$,
 for $\interpretationOmega\in\INTERPRETATIONS$ and $\varphi\in\logique$,
 $\interpretationOmega$ is said to be a \emph{model} of $\varphi$.
The set of models of $\varphi$ is called by $\Mod(\varphi)$.
Therefore, the entailment relation is defined as follows:
 for $\varphi_1, \varphi_2\in\logique$,
 $\varphi_1\models\varphi_2$ if $\Mod(\varphi_1)\subseteq\Mod(\varphi_2)$.
From that, it can be implied that
 $\varphi_1\equiv\varphi_2$ is equivalent to $\Mod(\varphi_1)=\Mod(\varphi_2)$.

A formula $\varphi$ is \emph{consistent} (or satisfiable)
 if $\Mod(\varphi)\neq\emptyset$.
$\varphi$ is a \emph{tautology} if $\Mod(\varphi)=\INTERPRETATIONS$.

$\logique$ is assumed to be \emph{closed under conjunction},
 which means that
 for any $\varphi_1,\varphi_2\in\logique$ there exists $\varphi\in\logique$
 such that $\Mod(\varphi)=\Mod(\varphi_1)\cap\Mod(\varphi_2)$;
 $\varphi$ is unique up to equivalence and is written
 $\varphi_1\land\varphi_2$.
$\land$ is associative wrt equivalence, so one can write
 $\varphi_1\land\varphi_2\land\varphi_3$:
 no matter where the parentheses are placed, the formula will have
 the same set of models.
Thus, the formalism is simplified, without loss of expressiveness,
 by removing such useless parentheses.
It is also commutative wrt equivalence.

A \emph{knowledge base} $\baseConnaissances$ is a finite subset of $\logique$.
It is assimilated as the conjunction of its elements.

A formalism $(\logique, {\models})$ is \emph{closed under disjunction} if
 for any $\varphi_1,\varphi_2\in\logique$ there exists $\varphi\in\logique$
 such that $\Mod(\varphi)=\Mod(\varphi_1)\cup\Mod(\varphi_2)$;
 then $\varphi$ is unique up to equivalence and is written
 $\varphi_1\lor\varphi_2$.
$\lor$ is commutative and associative wrt equivalence.

A formalism $(\logique, {\models})$ is \emph{closed under negation} if
 for any $\varphi_1\in\logique$ there exists $\varphi\in\logique$
 such that $\Mod(\varphi)=\INTERPRETATIONS\setminus\Mod(\varphi_1)$;
 then $\varphi$ is unique up to equivalence and is denoted by
 $\lnot\varphi_1$.

A formalism $(\logique, {\models})$ is \emph{propositionally closed}
 if it is closed under conjunction and negation.
In this situation, it is also closed under disjunction
 (consider $\varphi_1\lor\varphi_2$ as an abbreviation for
  $\lnot(\lnot\varphi_1\land\lnot\varphi_2)$).

The \emph{propositional closure} of a formalism $(\logique, {\models})$
 is the formalism $(\logiqueCP, {\modelsCP})$
 such that $\logiqueCP$ is the smallest superset of $\logique$
 verifying:
 \begin{itemize}
 \item
   If $\varphi_1, \varphi_2\in\logiqueCP$ then $\varphi_1\land\varphi_2\in\logiqueCP$;
 \item
   If $\varphi_1, \varphi_2\in\logiqueCP$ then $\varphi_1\lor\varphi_2\in\logiqueCP$;
 \item
   If $\varphi\in\logiqueCP$ then $\lnot\varphi\in\logiqueCP$;
 \end{itemize}
  and $\modelsCP$ is the entailment relation defined by the
  $\ModCP$ function which extends $\Mod$ on $\logiqueCP$ and is such
  that
  $\ModCP(\varphi_1\land\varphi_2)=\ModCP(\varphi_1)\cap\ModCP(\varphi_2)$,
  $\ModCP(\varphi_1\lor\varphi_2)=\ModCP(\varphi_1)\cup\ModCP(\varphi_2)$,
  and
  $\ModCP(\lnot\varphi_1)=\INTERPRETATIONS\setminus\ModCP(\varphi_1)$
  (for any $\varphi_1, \varphi_2\in\logiqueCP$).
The meta-language expression $\varphi_1\equivCP\varphi_2$ means that
 $\varphi_1\modelsCP\varphi_2$ and $\varphi_2\modelsCP\varphi_1$.
In the following, when the context is explicit, hats will be
 omitted
 ($\models$ and $\equiv$ instead of $\modelsCP$ and $\equivCP$).

Let us consider a propositionally closed formalism $(\logique, {\models})$.
An \emph{atom} is a formula without any occurrence of the symbols
 $\lnot$, $\lor$ and $\land$
 (e.g. in propositional logic, atoms are propositional variables).
A \emph{literal} either is an atom (positive literal)
 or is of the form $\lnot{}a$ where $a$ is an atom (negative literal).
A formula is under disjunctive normal form (\emph{DNF})
 if it is a disjunction of conjunctions of literals.
Every formula $\varphi$ is equivalent to a formula under DNF.
To prove this, first, it can be proven that the following equivalences
 hold:
 \begin{equation}
   \begin{split}
     \varphi\land(\varphi_1\lor\varphi_2\lor\ldots\varphi_n)
     &\equiv
     (\varphi\land\varphi_1)\lor\ldots\lor(\varphi\land\varphi_n)
     \\
     \lnot(\varphi_1\land\ldots\land\varphi_n)
     &\equiv
     \lnot\varphi_1\lor\ldots\lor\lnot\varphi_n
     \\
     \lnot(\varphi_1\lor\ldots\lor\varphi_n)
     &\equiv
     \lnot\varphi_1\land\ldots\land\lnot\varphi_n
     \\
     \lnot\lnot\varphi
     &\equiv
     \varphi
   \end{split}
   \label{eq:equivalences-pour-DNF}
 \end{equation}
 for any $\varphi, \varphi_1, \ldots, \varphi_n\in\logique$.
Then, applying these equivalences from left to right until
 it is not possible to do this, starting with $\varphi$,
 results in a formula under DNF equivalent to $\varphi$.

\subsection{Distance functions}
\def\unensemble{X}

A distance function on a set $\unensemble$
 is a function $\dist : \unensemble^2\rightarrow\Reels_+$
 (where $\Reels_+$ is the set of non negative real numbers) verifying
 the separation axiom
 ($\dist(x, y)=0$ iff $x=y$),
 the symmetry
 ($\dist(x, y)=\dist(y, x)$)
 and the triangular inequality
 ($\dist(x, z)\leq\dist(x, y)+\dist(y, z)$).

Given $A, B\in\EnsembleParties{\unensemble}$ and $y\in\unensemble$,
 $\dist(A, y)$ is an abbreviation for
 $\inf_{x\in{}A}\dist(x, y)$ and
 $\dist(A, B)$ is an abbreviation for
 $\inf_{x\in{}A, y\in{}B}\dist(x, y)$.

\subsection{Qualitative algebras}

Qualitative algebras (QAs) are formalisms that are widely used
 for representation depending on time and/or on space~\citep{stock97}.
Formulas built upon QAs are closed under conjunction,
 though the symbol $\land$ is not systematically used.
Some of the usual notations and conventions of QAs
 are changed to better fit the scope of this paper.
In particular, the representation of knowledge by graphs
 (namely, qualitative constraint networks)
 is not well-suited here, because of the propositional closure
 introduced afterwards.

First, the Allen algebra is introduced: 
 it is one of the most famous QAs and it will be used in
 our examples throughout the paper.
Then, a general definition of QAs is given.

\subsubsection{The Allen algebra}
 is used for representing relations between time intervals~\citep{allen83cacm}.
A formula of the Allen Algebra can be seen as a conjunction of constraints, where
 a constraint is an expression of the form
 $\varx\relAQ\vary$ stating that the interval $\varx$ is related to
 the interval $\vary$ by the relation $\relAQ$.
$13$ base relations are introduced (cf. figure~\ref{fig:Allen}(a));
 a relation $\relAQ$ is either one of these base relations
 or the union of base relations $\relAQNum1$, \ldots, $\relAQNum{m}$
 denoted by ${\relAQNum1}\ourel\ldots\ourel{\relAQNum{m}}$.

For example, if one wants to express that the maths course
 is immediately before the physics course
 which is before the English course
 (either with a time lapse, or immediately before it), one can write the
 formula:
 \begin{align*}
   \maths \mAQ \physique
   \quad\land\quad
   \physique \mathrel{{\bAQ}\ourel{\mAQ}} \anglais
 \end{align*}

$\logiqueAllen$ is the set of the formulas of the Allen algebra.

\begin{figure}
    \begin{center}
        {
  \def\intx{$\rule[1.5mm]{20mm}{1.5mm}$}
  \def\inty#1#2{{\textcolor{gray!50}{\hspace{#1mm}$\rule[0mm]{#2mm}{1.5mm}$}}}
  \def\ligne#1#2#3#4{\makebox[0mm][l]{\intx}\inty{#1}{#2} & $\nomrelAQ{#3}$ & \emph{#4}}
  \def\rien#1{#1} 
  %
    \begin{tabular}{l c l}
      \ligne{25}{15}{b}{is \rien{b}efore}
      \\[1.5mm]
      \ligne{20}{15}{m}{\rien{m}eets}
      \\[1.5mm]
      \ligne{15}{15}{o}{\rien{o}verlaps}
      \\[1.5mm]
      \ligne{0}{25}{s}{\rien{s}tarts}
      \\[1.5mm]
      \ligne{-5}{30}{d}{is \rien{d}uring}
      \\[1.5mm]
      \ligne{-5}{25}{f}{\rien{f}inishes}
      \\[1.5mm]
      \ligne{0}{20}{eq}{\rien{eq}uals}
    \end{tabular}
  %
}

    \end{center}
    \centerline{(a) Intuitive meaning.}
    \par\vspace{2mm}
    $\Domaine$ is the set of closed and bounded intervals $[a, b]$
     of $\Rationnels$ (the set of rational numbers) such that
     $a<b$.
    The base relations are defined as follows, with
     $d_1, d_2\in\Domaine$,
     $d_1=[a_1, b_1]$, $d_2=[a_2, b_2]$: 
    \begin{align*}
      (d_1, d_2)\in\interprel{\eqAQ} \qquad &\text{if $a_1=a_2$ and $b_1=b_2$}
      \\
      (d_1, d_2)\in\interprel{\bAQ} \qquad &\text{if $b_1<a_2$}
      \\
      (d_1, d_2)\in\interprel{\mAQ} \qquad &\text{if $a_2=b_1$}
      \\
      (d_1, d_2)\in\interprel{\oAQ} \qquad &\text{if $a_1<a_2$, $a_2<b_1$ and $b_1<b_2$}
      \\
      (d_1, d_2)\in\interprel{\sAQ} \qquad &\text{if $a_1=a_2$ and $b_1<b_2$}
      \\
      (d_1, d_2)\in\interprel{\fAQ} \qquad &\text{if $a_1>a_2$ and $b_1=b_2$}
      \\
      (d_1, d_2)\in\interprel{\dAQ} \qquad &\text{if $a_1>a_2$ and $b_1<b_2$}
      \\
      {\biAQ}={\inverserel{\bAQ}}
      \qquad
      &{\miAQ}={\inverserel{\mAQ}}
      \qquad
      {\oiAQ}={\inverserel{\oAQ}}
      \\
      {\siAQ}={\inverserel{\sAQ}}
      \qquad
      &{\fiAQ}={\inverserel{\fAQ}}
      \qquad\quad
      {\diAQ}={\inverserel{\dAQ}}
    \end{align*}
    %
    \centerline{(b) Semantics based on a domain.}
  \caption{The base relations of $\logiqueAllen$.\label{fig:Allen}}
\end{figure}

\subsubsection{Qualitative algebras}
 in general are defined below, first by their syntax and then by their semantics.
Finally, some inference mechanisms are described.

\paragraph{Syntax.}
A finite set of symbols $\RelationsAQBases$ is given
 (with $|\RelationsAQBases|\geq2$).
A \emph{base relation} is an element of $\RelationsAQBases$.
A \emph{relation} is an expression of the
 form ${\relAQNum1}\ourel\ldots\ourel{\relAQNum{m}}$
 ($m\geq0$),
 such that a base relation occurs at most once in a relation
 and the order is irrelevant
 (e.g. ${\relAQNum1}\ourel{\relAQNum2}$ and ${\relAQNum2}\ourel{\relAQNum1}$
  are equivalent expressions).
The set of relations is denoted by $\RelationsAQ$,
 which is of cardinality $|\RelationsAQ|=2^{|\RelationsAQBases|}$.
The relation in which all the base relations occur is named
 $\toutAQ$.
The relation ${\relAQNum1}\ourel\ldots\ourel{\relAQNum{m}}$ with $m=0$
 is named $\aucunAQ$.

A finite set of symbols $\Variables$, disjoint from $\RelationsAQBases$, is given.
A \emph{(qualitative) variable} is an element of $\Variables$.

A \emph{constraint} is an expression of the form $\varx\relAQ\vary$
 where $\varx, \vary\in\Variables$ and ${\relAQ}\in\RelationsAQ$.

A \emph{formula} $\varphi$ is a conjunction of $n$ constraints ($n\geq1$):
 $\varxNum1\relAQNum1\varyNum1\;\land\;\ldots\;\land\;\varxNum{n}\relAQNum{n}\varyNum{n}$.
A constraint of $\varphi$ is one of the constraints of this conjunction.
Let $\logiqueQA$ be the set of the formulas of the considered QA.
The atoms of $\logiqueQA$ are the constraints.

A formula $\varphi\in\logiqueQA$ is under normal form if
 for every $\varx, \vary\in\Variables$ with $\varx\neq\vary$,
 there is exactly one ${\relAQ}\in\RelationsAQ$ such that
 $\varx\relAQ\vary$ is a constraint of $\varphi$.
Then, this relation $\relAQ$ is denoted by $\relDe{\varphi}(\varx, \vary)$.

A \emph{scenario} $\scenario$ is a formula under normal form such that,
 for every variables $\varx$ and $\vary$, $\varx\neq\vary$,
 $\relDe{\scenario}(\varx, \vary)\in\RelationsAQBases$.
Therefore, there are $|\RelationsAQBases|^{|\Variables|\times(|\Variables|-1)}$
 scenarios.
Given a formula $\varphi$ under normal form, $\Scenarios(\varphi)$
 is the set of scenarios obtained by substituting each constraint
 $\varx\mathrel{{\relAQNum1}\ourel\ldots\ourel{\relAQNum{m}}}\vary$ ($m\geq2$)
 of $\varphi$
 with a constraint $\varx\relAQNum{k}\vary$ ($1\leq{}k\leq{}m$).

\paragraph{Semantics.}

The semantics will be described twice.
The two descriptions correspond to the same entailment
 relation, but serve different purposes.
The first one gives a semantics based on a domain $\Domaine$
 on which the relations are interpreted, but the class of
 interpretations for this semantics is difficult to use for
 the purpose of the paper.
This motivates a second semantics, defining a finite set $\INTERPRETATIONS$
 of interpretations, where an interpretation is a consistent scenario
 and on which a distance function can be easily defined.

\subparagraph{Semantics based on a domain $\Domaine$.}
The semantics of the Allen algebra given in figure~\ref{fig:Allen}(b)
 exemplifies this section.

Let $\Domaine$ be a nonempty set,
 and let $\interprel{\cdot}$ be a mapping that associates to
 each ${\relAQ}\in\RelationsAQBases$
 a relation $\interprel{\relAQ}$ on $\Domaine$
 ($\interprel{\relAQ}\subseteq\Domaine^2$) such that:
 \begin{itemize}
 \item
   $\interprel{\RelationsAQBases}=\{{\interprel{\relAQ}} ~|~ {\relAQ}\in\RelationsAQBases\}$
   is a partition of $\Domaine^2$:
   for each $(d, e)\in\Domaine^2$ there is exactly one
   ${\relAQ}\in\RelationsAQBases$ such that $(d, e)\in{\interprel{\relAQ}}$.
   Furthermore, each $\interprel{\relAQ}\in\interprel{\RelationsAQBases}$
    is nonempty.
 \item
   For each ${\relAQ}\in\RelationsAQBases$ there exists exactly one
    ${\relAQs}\in\RelationsAQBases$ such that ${\interprel{\relAQs}}$ is
    the inverse of the relation $\interprel{\relAQ}$.
   In the following, $\relAQs$ is denoted by $\inverserel{\relAQ}$.
 \item
   There is a base relation, denoted by $\eqAQ$, that is interpreted
    as the equality on $\Domaine$:
    $\interprel{\eqAQ}=\{(d, d) ~|~ d\in\Domaine\}$.
   $\eqAQ$ is its own inverse: $\inverserel{\eqAQ}={\eqAQ}$.
 \end{itemize}
This mapping is extended on $\RelationsAQ$ as follows:
 \begin{align*}
   \text{if ${\relAQ}\in\RelationsAQ$ and ${\relAQ}={\relAQNum1}\ourel\ldots\ourel{\relAQNum{m}}$}
   \text{ then } \interprel{\relAQ}=\interprel{\relAQNum1}\cup\ldots\cup\interprel{\relAQNum{m}}
 \end{align*}
In other words: $(d, e)\in\interprel{\relAQ}$ iff
 there exists $i\in\{1, \ldots, m\}$
 such that $(d, e)\in\interprel{\relAQNum{i}}$.

An interpretation $\interpretation$ is a mapping from $\Variables$
 to $\Domaine$.
$\interpretation$ is a model of $\varx\relAQ\vary$
 if \linebreak $(\interpretation(\varx), \interpretation(\vary))\in{\interprel{\relAQ}}$.
$\interpretation$ satisfies a conjunction of constraints if it
 satisfies every constraint in the conjunction.
A formula $\varphi$ is consistent if there exists an interpretation
 satisfying it.
Finally, $\varphi_1\models\varphi_2$ if every interpretation that satisfies
 $\varphi_1$ also satisfies $\varphi_2$.

According to this semantics,
 any constraint of the form $\varx\toutAQ\vary$ is a tautology and
 any constraint of the form $\varx\aucunAQ\vary$ is inconsistent.
Moreover, any formula $\varphi$ is equivalent to a formula $\varphi'$
 under normal form.\footnote{%
   This can be proven by considering,
    for any $\varx, \vary\in\Variables$, $\varx\neq\vary$,
    the set $R_{\varx\vary}$ of relations $\relAQ$ such that $\varx\relAQ\vary$
    is a constraint of $\varphi$.
   If $R=\emptyset$, let $C_{\varx\vary}$ be the constraint
    $\varx\toutAQ\vary$.
   Else, let ${\relAQ}_{\varx\vary}$ be the relation constituted of the base
    relations that occur in all relations of $R_{\varx\vary}$
    (for example, if
     $R_{\varx\vary}=\{{{\bAQ}\ourel{\mAQ}\ourel{\oAQ}},~~ {{\mAQ}\ourel{\oAQ}\ourel{\sAQ}}\}$
     then ${\relAQ}_{\varx\vary}={\mAQ}\ourel{\oAQ}$).
    Then, $C_{\varx\vary}$ is the constraint $\varx\mathrel{{\relAQ}_{\varx\vary}}\vary$.
    Finally, the formula
     $\displaystyle\bigwedge_{\varx, \vary\in\Variables, \varx\neq\vary}C_{\varx\vary}$
     is a formula under normal form equivalent to $\varphi$.}
Thus, in the following of the paper, all the formulas of $\varphi$
 are assumed to be under normal form, without lost of expressiveness.

\subparagraph{Semantics defined by consistent scenarios.}
%
The semantics can be characterized a posteriori thanks to
 consistent scenarios.

Let $\INTERPRETATIONS$ be the set of consistent scenarios on the
 variables of $\Variables$.
It can be easily proven that
 $|\INTERPRETATIONS|\leq|\RelationsAQBases|^{|\Variables|\times(|\Variables|-1)/2}$:
 if $\varx\relAQ\vary$ is a constraint of a consistent scenario $\scenario$
 then $\vary\inverserel{\relAQ}\varx$ is also a constraint of $\scenario$.

Let $\Mod : \logique\rightarrow\EnsembleParties{\INTERPRETATIONS}$ be defined
 by
 \begin{equation*}
   \Mod(\varphi)=\{\scenario\in\INTERPRETATIONS~|~\scenario\models\varphi\}
 \end{equation*}
 for $\varphi\in\logique$, where $\models$ is the entailment relation
 defined below, thanks to the semantics based on a domain.

$\Omega$ and $\Mod$ make it possible to define a semantics on $\logique$
 which coincides with the semantics based on a domain
 (hence the same entailment relation $\models$).
However, this second semantics is more practical to use for the
 definition of revision on QAs.

\paragraph{Inferences.}
The main inference about QAs used in this paper is
 the test of consistency.

It is usually implemented in the following way.
Properties on formulas named \emph{arc consistency} and \emph{path consistency}
 are defined.
Having those properties are a necessary condition and,
 in most algebras,
 a sufficient condition for
 scenarios to be consistent
 (a scenario $\scenario$ is consistent iff it is arc-consistent and path-consistent).
A way to test if $\varphi\in\logiqueQA$ is consistent
 is to test whether there exists $\scenario\in\Scenarios(\varphi)$
 that is consistent.

A formula $\varphi\in\logiqueQA$ is arc-consistent if:
 \begin{itemize}
  \item
   For all $2$ variables $\varx, \vary\in\Variables$,
    $\relDe{\varphi}(\varx, \vary)\neq{\aucunAQ}$.
  \item
    For all $2$ variables $\varx, \vary\in\Variables$,
     $\relDe{\varphi}(\varx, \vary)=\inverserel{\relDe{\varphi}(\vary, \varx)}$.
 \end{itemize}

The definition of path consistency is based on a binary operation
 on $\RelationsAQ$, written $\fcomp$.
It is defined on $\RelationsAQBases$
 (for example by a $|\RelationsAQBases|\times|\RelationsAQBases|$ table)
 and extended on
 $\RelationsAQ$ thanks to the equalities
 \begin{align*}
   ({\relAQNum1}\ourel\ldots\ourel{\relAQNum{m}})\fcomp{\relAQs}
   &= ({\relAQNum1}\fcomp{\relAQs})\ourel\ldots\ourel({\relAQNum{m}}\fcomp{\relAQs})
   \\
   {\relAQs}\fcomp({\relAQNum1}\ourel\ldots\ourel{\relAQNum{m}})
   &= ({\relAQs}\fcomp{\relAQNum1})\ourel\ldots\ourel({\relAQs}\fcomp{\relAQNum{m}})
 \end{align*}
In $\logiqueAllen$, $\fcomp$ corresponds to the classical composition
 of relations:
 $\interprel{{\relAQs}\fcomp{\relAQr}}=\interprel{\relAQs}\comp\interprel{\relAQr}$
 (i.e. $\interpretation\models\varx\mathrel{{\relAQs}\fcomp{\relAQr}}\vary$
  if there exists $d\in\Domaine$ such that
  $(\interpretation(\varx), d)\in\interprel{\relAQr}$ and
  $(d, \interpretation(\vary))\in\interprel{\relAQs}$).
In some other QAs, $\fcomp$ corresponds to a different operation,
 called the weak composition~\citep{renz05,ligozat04}.

A formula $\varphi\in\logiqueQA$ is path-consistent if,
   for all $3$ variables $\varx, \vary, \varz\in\Variables$,
    the constraint deduced by composition between $\varx$ and $\varz$
    ($\varx\mathrel{\relDe{\varphi}(\vary, \varz)\fcomp\relDe{\varphi}(\varx, \vary)}\varz$)
    is weaker than the constraint stated in $\varphi$
    (i.e. $\varx\mathrel{\relDe{\varphi}(\varx, \varz)}\varz$).


\subsection{Belief change}

\subsubsection{Belief revision}
 is an operation of belief change.
Intuitively, given the set of beliefs $\psi$ 
 an agent has about a static world, it consists in considering the
 change of their beliefs when faced with a new set of beliefs
 $\mu$, assuming that $\mu$ is considered to be unquestionable
 by the agent.
The resulting set of beliefs is noted $\psi\rev\mu$, and depends on the
 choice of a belief revision operator $\rev$.
In~\citet{agm-85}, the principle of minimal change has been stated
 and could be formulated as follows:
 $\psi$ is minimally changed into $\psi'$ such that the conjunction
 of $\psi'$ and $\mu$ is consistent,
 and the result of the revision is this conjunction.
Hence, there is more than one possible $\rev$ operator, since
 the definition of $\rev$ depends on how belief change is ``measured''.
More precisely, the minimal change principle has been formalized by
 a set of postulates, known as the AGM~postulates---%
 after the names of~\citet*{agm-85}.
\citet{peppa08} presents
 a detailed survey of belief revision at a general level
 (for any formalisms satisfying some general properties, such
  as closure under conjunction)
 including some representation theorems and the discussion of certain related issues
 (other belief change operators, etc.).

In~\citet{katsuno91}, revision has been studied in the framework of
 propositional logic (with a finite set of variables).
The AGM~postulates are translated into this formalism as follows
 ($\psi$, $\psi_1$, $\psi_2$, $\mu$, $\mu_1$, $\mu_2$ and $\phi$ are
 propositional formulas):
\begin{enumerate}[($\rev$1)]
\item
  $\psi\rev\mu\models\mu$.
\item
  If $\psi\land\mu$ is consistent then $\psi\rev\mu\equiv\psi\land\mu$.
\item
  If $\mu$ is consistent then $\psi\rev\mu$ is consistent.
\item
  If $\psi_1\equiv\psi_2$ and $\mu_1\equiv\mu_2$ then
  $\psi_1\rev\mu_1\equiv\psi_2\rev\mu_2$.
\item
  $(\psi\rev\mu)\land\phi \models \psi\rev(\mu\land\phi)$.
\item
  If $(\psi\rev\mu)\land\phi$ is consistent then \\
  $\psi\rev(\mu\land\phi) \models (\psi\rev\mu)\land\phi$.
\end{enumerate}
Moreover, a family of revision operators is defined based on distance functions
 $\dist$ on $\INTERPRETATIONS$, where $\INTERPRETATIONS$ is the set of interpretations:
 the revision of $\psi$ by $\mu$ according to $\revDist$
 ($\psi\revDist\mu$) is such that
 \begin{align}
   \Mod(\psi\revDist\mu)
   &=
   \{\interpretationOmega\in\Mod(\mu)
    ~|~
    \dist(\Mod(\psi), \interpretationOmega) = \distPsiMu\}
    \notag
    \\
    \text{with }\distPsiMu &= \dist(\Mod(\psi), \Mod(\mu))
    \label{eq:revDist}
 \end{align}
Intuitively, $\distPsiMu$ measures, using $\dist$, the minimal modification
 of $\psi$ into $\psi'$ needed to make $\psi'\land\mu$ consistent.

It appears that it is not required for $\dist$ to be a true distance function,
 i.e. symmetry and triangular inequality are not required:
 if $\dist$ verifies the separation postulate, then $\revDist$
 verifies postulates ($\rev$1--6).

This approach can be extended to other formalisms for which a model-theoretic
 semantics can be defined such that a distance function can be defined on
 the set of interpretations $\INTERPRETATIONS$.
However, in some of these formalisms, a representability issue can be raised:
 it may occur that a subset $\SEI$ of $\INTERPRETATIONS$ is not representable,
 i.e. there is no formula $\varphi$ such that $\Mod(\varphi)=\SEI$.
This representability issue is addressed below, for the case of qualitative
 algebras.

Belief revision has been applied to the issue of the adaptation process
 of a case-based reasoning system~\citep{cojan2012,avec_initiales_prenoms_dufourlussier:hal-00871703}.

\subsubsection{Belief contraction}
 is the operation of belief change that associates to a set of beliefs
 $\psi$ and a set of beliefs $\mu$, a set of beliefs $\psi\contraction\mu$
 such that $\psi\contraction\mu\not\models\mu$.
In propositionally closed formalisms, the Harper identity makes it possible to define
 a contraction operator $\contraction$ thanks to a revision operator $\rev$
 with
 \begin{equation}
   \psi\contraction\mu = \psi\lor(\psi\rev\lnot\mu)
   \label{eq:identite-Harper}
 \end{equation}
Conversely, the Levi identity makes it possible to define a revision operator
 $\rev$ with
 \begin{equation*}
   \psi\rev\mu = (\psi\contraction\lnot\mu)\land\mu
 \end{equation*}

\subsubsection{Belief merging}
 is another operation of belief change.
Given some sets of beliefs $\psi_1$, \ldots, $\psi_n$,
 their merging is a set of beliefs
 $\Psi$ that contains ``as much as possible'' of the beliefs in
 the $\psi_i$'s.
Intuitively, $\Psi$ is the conjunction of $\psi_1'$, \ldots, $\psi_n'$
 such that each $\psi_i$ has been minimally modified into $\psi_i'$
 in order to make this conjunction consistent.
Some postulates of belief merging have been proposed and
 discussed~\citep{konieczny02},
 in a similar way as the AGM~postulates.

In practice, studies on belief merging are often easy to
 reuse for belief revision:
 the revision of $\psi$ by $\mu$
 can be seen
 as a kind of merging of $\psi$ and $\mu$ such that no
 modification is allowed on $\mu$.

For instance, belief merging has been studied for
 qualitative algebras by~\citet{condotta10} and \citet{wallgrun10relation}.
\citeauthor{wallgrun10relation} have proposed \emph{syntax-based} revision operators for qualitative algebras.
Those operators do not obey the AGM postulates---%
 most importantly, the syntax-independance postulate.
Therefore, their work cannot serve as a base for developping a model distance-based, AGM revision operator.
\citeauthor{condotta10}, on the other hand, proposed both syntax and semantic-based operators.
The latter can be used as a base to create corresponding revision operators.

\subsection{Belief revision in qualitative algebras}

\begin{figure}
  \center{{
  \def\Al#1{\ensuremath{\relQCNsa{#1}}}
  \begin{tikzpicture}[node distance=8mm]
    \node(b)[]{\Al{b}} ;
    \node(m)[right of=b]{\Al{m}} ;
    \node(o)[right of=m]{\Al{o}} ;
    \node(oD)[right of=o]{} ; 
    \node(fi)[above of=oD]{\Al{fi}} ;
    \node(fiH)[above of=fi]{} ; 
    \node(s)[below of=oD]{\Al{s}} ;
    \node(sB)[below of=s]{} ; 
    \node(eq)[right of=oD]{\Al{eq}} ;
    \node(eqD)[right of=eq]{} ; 
    \node(di)[right of=fiH]{\Al{di}} ;
    \node(d)[right of=sB]{\Al{d}} ;
    \node(si)[above of=eqD]{\Al{si}} ;
    \node(f)[below of=eqD]{\Al{f}} ;
    \node(oi)[right of=eqD]{\Al{oi}} ;
    \node(mi)[right of=oi]{\Al{mi}} ;
    \node(bi)[right of=mi]{\Al{bi}} ;
    \draw[-] (b) -- (m) ;
    \draw[-] (m) -- (o) ;
    \draw[-] (o) -- (fi) ;
    \draw[-] (o) -- (s) ;
    \draw[-] (fi) -- (eq) ;
    \draw[-] (s) -- (eq) ;
    \draw[-] (fi) -- (di) ;
    \draw[-] (s) -- (d) ;
    \draw[-] (di) -- (si) ;
    \draw[-] (eq) -- (si) ;
    \draw[-] (d) -- (f) ;
    \draw[-] (eq) -- (f) ;
    \draw[-] (si) -- (oi) ;
    \draw[-] (f) -- (oi) ;
    \draw[-] (oi) -- (mi) ;
    \draw[-] (mi) -- (bi) ;
  \end{tikzpicture}
}}
  \caption{One possible neighborhood graph for the Allen algebra
            \citep{ligozat91generalized}.\label{fig:graphe-voisinage}}
\end{figure}

In~\citet{condotta10} a belief merging operator is defined
 which is based on a distance function $\dist$ on scenarios, defined as follows.
Let $\distRel$ be a distance function on $\RelationsAQBases$.
Let $\scenario, \tau\in\INTERPRETATIONS$, be two scenarios based
 on the same set of variables $\Variables$.
Then, $\dist$ is defined by
 \begin{equation*}
   \dist(\scenario, \tau)
   =
   \sum_{\varx,\vary\in\Variables, \varx\neq\vary}
   \distRel(\relDe{\scenario}(\varx, \vary), \relDe{\tau}(\varx, \vary))
 \end{equation*}
One of the possibilities for $\distRel$ is the use of a neighborhood
 graph, i.e. a connected, undirected graph whose vertices are
 the base relations and such that $\distRel({\relAQr}, {\relAQs})$
 is the length of the shortest path between $\relAQr$ and $\relAQs$.
Figure~\ref{fig:graphe-voisinage} presents such a graph for the Allen
 algebra.
Then, the models of the merging of $\psi_1$, \ldots, $\psi_n$
 is the set of scenarios $\scenario$ that minimizes
 $\sum_{i=1}^n\dist(\Mod(\psi_i), \scenario)$
 (other aggregation functions than the sum can also be used).
The representability issue can be raised since the set of
 the optimal scenarios is not necessarily representable in
 $(\logiqueQA, \models)$.
One solution to address this issue is to find a formula
 $\revision\in\logiqueQA$ whose set of models includes closely
 the set of optimal models.
Another solution is to consider that the result of merging
 is a set of scenarios.

This representability issue is also raised for revision in
 $\logiqueQA$, and the second type of
 solution is used:
 for $\psi, \mu\in\logiqueQA$, $\psi\revDist\mu$ \emph{is}
 the set of the scenarios that are the closest to
 $\Mod(\psi)$.

In~\citet{dufourlussier:hal-00735231}
 and~\citet{avec_initiales_prenoms_dufourlussier:hal-00871703},
 an algorithm for $\revDist$ in a qualitative algebra
 $(\logiqueQA, \models)$ is defined and its implementation in the
 system \revisorQA---for three QAs---is described.
Its inputs are $\psi$ and $\mu$, which are in $\logiqueQA$.
Its output is the set of the scenarios $\scenario\in\Mod(\mu)$
 such that $\dist(\Mod(\psi), \scenario)$ is minimal.
Its principle is based on an A* search~\citep{mmt/Pear84a} with an
 admissible heuristics.
For this search:
 \begin{itemize}
 \item
   A state is a $\varphi\in\logiqueQA$.
 \item
   The initial state is $\mu$.
 \item
   A successor of a state $\varphi$ is a state $\varphi'$
    obtained by substituting in $\varphi$ a constraint
    $\varx\mathrel{{\relAQNum1}\ourel\ldots\ourel{\relAQNum{m}}}\vary$ ($m\geq2$)
    with a (more specific) constraint
    $\varx\relAQNum{k}\vary$ ($1\leq{}k\leq{}m$).
 \item
   A final state is a consistent scenario.
 \item
   The heuristic cost function is an estimation of the distance
    from $\psi$ to the state $\varphi$
    (estimation that is exact on final states).
 \end{itemize}
A slight modification wrt the classical A* algorithm is that the search
 stops after \emph{all} the states at minimal cost have been generated---%
 not as soon as a first final state is found.
The result is the set of final states $\varphi$ which are the models of
 $\mu$ that are the closest to models of $\psi$ according to $\dist$.
It can be noticed that the cost of a final state generated by an A* search
 is $\distPsiMu$ (as defined in~(\ref{eq:revDist})).

The worst-case complexity of this algorithm depends
 on the amount of scenarios in $\mu$,
 which is of the order of
 $O\left(|\RelationsAQBases|^{\frac{|V|\cdot(|V|-1)}{2}}\right)$.

\citet{hue12revising} have also implemented a family of revision operators on QAs.
Their search algorithm is based on the GQR reasoner~\citep{gantner08gqr},
 which does not use a heuristic search
 but, on the other hand, takes advantage of the existence of pre-convex relations---%
 which under certain circumstances make it possible to guarantee consistency
 without having to compute scenarios.

\section{Motivations}

Let us consider the following formulas of $\logiqueAllen$:
 \begin{align*}
   \psi &= \varx\eqAQ\vary \;\land\;
           \vary\eqAQ\varz
   \\
   \mu &= \varx\dAQ\varz  \;\land\; \varz\diAQ\varx
 \end{align*}
The set of models of $\mu$ that are the closest to models of $\psi$
 according to $\dist$ is
 $\SEI=\{\scenarioNum{1}, \scenarioNum{2}, \scenarioNum{3}, \scenarioNum{4}\}$
 with
 \begin{align*}
   \scenarioNum1 &= \mu \;\land\;
                    \varx\dAQ\vary \;\land\;
                    \vary\diAQ\varx \;\land\;
                    \vary\eqAQ\varz \;\land\;
                    \varz\eqAQ\vary
   \\
   \scenarioNum2 &= \mu \;\land\;
                    \varx\sAQ\vary \;\land\;
                    \vary\siAQ\varx \;\land\;
                    \vary\fAQ\varz \;\land\;
                    \varz\fiAQ\vary
   \\
   \scenarioNum3 &= \mu \;\land\;
                    \varx\fAQ\vary \;\land\;
                    \vary\fiAQ\varx \;\land\;
                    \vary\sAQ\varz \;\land\;
                    \varz\siAQ\vary
   \\
   \scenarioNum2 &= \mu \;\land\;
                    \varx\eqAQ\vary \;\land\;
                    \vary\eqAQ\varx \;\land\;
                    \vary\dAQ\varz \;\land\;
                    \varz\diAQ\vary
  \end{align*}
 and it can be proven that no formula $\revision$ of the Allen
 algebra is such that $\Mod(\revision)=\SEI$.\footnote{%
   To prove this, first, let us consider the formula
   \begin{align*}
     \varphi = \mu \;\land\;
               &\varx\mathrel{\dAQ\ourel\sAQ\ourel\fAQ\ourel\eqAQ}\vary \;\land\;
                \vary\mathrel{\diAQ\ourel\siAQ\ourel\fiAQ\ourel\eqAQ}\varx \;\land\; \\
               &\vary\mathrel{\dAQ\ourel\sAQ\ourel\fAQ\ourel\eqAQ}\varz \;\land\;
                \varz\mathrel{\diAQ\ourel\siAQ\ourel\fiAQ\ourel\eqAQ}\vary
   \end{align*}
   $\varphi$ is such that $\SEI\subseteq\Mod(\varphi)$ and
   for each $\chi\in\logiqueAllen$, if $\SEI\subseteq\Mod(\chi)$
   then $\varphi\models\chi$
   ($\varphi$ is the most specific formula whose set of models contains $\SEI$).
   Now, $\SEI\neq\Mod(\varphi)$ since, for instance, the following consistent
    scenario belongs to $\Mod(\varphi)$ and not to $\SEI$:
    \begin{equation*}
      \scenario = \mu \;\land\;
                  \varx\dAQ\vary \;\land\;
                  \vary\diAQ\varx \;\land\;
                  \vary\dAQ\varz \;\land\;
                  \varz\dAQ\vary
    \end{equation*}
   Therefore, there is no $\revision\in\logiqueAllen$ such that
    $\Mod(\revision)=\SEI$.}
So, the representability issue is raised:
 $\revDist$ in $\logiqueAllen$ does not match exactly equation~(\ref{eq:revDist}).
Thus,
 either $\psi\revDist\mu$ gives a result outside of $\logiqueAllen$
 or $\psi\revDist\mu$ gives a formula $\revision$ that approximates
 the equality~(\ref{eq:revDist}).
By contrast, $\revDist$ defined by this equality can be defined in
 the propositional closure of the Allen algebra
 (which is a consequence of proposition~\ref{prop:representabilite},
  given in the next section),
 and this gives a first motivation for this work.

The second motivation is linked to the expressiveness of the formalisms:
 some knowledge are more easily represented in the propositional closure
 of a QA.
An example will be presented that is formalized using both $\logiqueAllen$
 and its propositional closure $\logiqueAllenCP$.
It appears to be much simpler (or ``more natural'') to formalize it
 in $\logiqueAllenCP$.
Moreover, still on this particular example, the computing
 time of the revision is shorter in the more expressive formalism
 $\logiqueAllenCP$, with the systems we have implemented.

The third motivation of this work is that a revision
 operator on the propositional closure of a QA can be used in the definition
 of a contraction operator, thanks to~(\ref{eq:identite-Harper}),
 which requires disjunction and negation connectors.

\section{Propositional closure of a qualitative algebra}

Let $(\logiqueQA, \models)$ be a qualitative algebra.
The propositional closure of this formalism, as defined in the preliminaries,
 is $(\logiqueQACP, \modelsCP)$.

\begin{proposition}[representability]\label{prop:representabilite}
  Every set of scenarios $\SEI\subseteq\INTERPRETATIONS$
   is representable in $\logiqueQACP$.
  More precisely, with $\displaystyle\varphi=\bigvee_{\scenario\in\SEI}\scenario$,
   $\Mod(\varphi)=\SEI$.
\end{proposition}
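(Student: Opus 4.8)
The plan is to reduce the claim to the single-scenario case and then invoke the defining clause for disjunction in $\ModCP$. First I would check that $\varphi = \bigvee_{\scenario\in\SEI}\scenario$ is a well-formed element of $\logiqueQACP$. The set $\INTERPRETATIONS$ of consistent scenarios is finite (indeed $|\INTERPRETATIONS|\leq|\RelationsAQBases|^{|\Variables|\times(|\Variables|-1)/2}$), so $\SEI$ is finite and the disjunction ranges over finitely many formulas, each of which lies in $\logiqueQA\subseteq\logiqueQACP$; since $\logiqueQACP$ is closed under disjunction, $\varphi\in\logiqueQACP$. The degenerate case $\SEI=\emptyset$ would be handled separately by taking $\varphi$ to be an inconsistent formula such as $\varx\aucunAQ\vary$, for which $\Mod(\varphi)=\emptyset=\SEI$.

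Next I would unfold $\Mod(\varphi)$ using the clause $\ModCP(\varphi_1\lor\varphi_2)=\ModCP(\varphi_1)\cup\ModCP(\varphi_2)$, extended to the finite disjunction, together with the fact that $\ModCP$ restricts to $\Mod$ on $\logiqueQA$. This yields $\Mod(\varphi)=\bigcup_{\scenario\in\SEI}\Mod(\scenario)$, so everything reduces to the key lemma: for each consistent scenario $\scenario$, $\Mod(\scenario)=\{\scenario\}$.

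The heart of the argument is this lemma, and its nontrivial inclusion is the main obstacle: showing that no consistent scenario $\tau$ distinct from $\scenario$ satisfies $\tau\models\scenario$. I would argue through the domain semantics. Since $\scenario$ and $\tau$ are both under normal form and mention only base relations, $\tau\neq\scenario$ means there is a pair $(\varx,\vary)$ with $\relDe{\tau}(\varx,\vary)\neq\relDe{\scenario}(\varx,\vary)$. Because $\tau$ is consistent, it has a domain-model $\interpretation:\Variables\rightarrow\Domaine$, for which $(\interpretation(\varx),\interpretation(\vary))\in\interprel{\relDe{\tau}(\varx,\vary)}$. The crucial ingredient is that the interpretations of the base relations form a partition of $\Domaine^2$: as $\relDe{\tau}(\varx,\vary)$ and $\relDe{\scenario}(\varx,\vary)$ are two distinct base relations, their interpretations are disjoint, so $(\interpretation(\varx),\interpretation(\vary))\notin\interprel{\relDe{\scenario}(\varx,\vary)}$. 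Hence $\interpretation$ violates the corresponding constraint of $\scenario$, witnessing $\tau\not\models\scenario$. The reverse inclusion $\scenario\in\Mod(\scenario)$ is immediate from reflexivity of $\models$ together with $\scenario\in\INTERPRETATIONS$, which establishes $\Mod(\scenario)=\{\scenario\}$.

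Finally I would combine the two steps: $\Mod(\varphi)=\bigcup_{\scenario\in\SEI}\Mod(\scenario)=\bigcup_{\scenario\in\SEI}\{\scenario\}=\SEI$, which is exactly the claim. The only point demanding real care is the partition argument above; it is what guarantees that a base relation pins down the truth of a constraint uniquely, so that a scenario is \emph{rigid} and admits only itself as a model among the consistent scenarios.
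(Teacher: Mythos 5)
Your proof is correct and follows essentially the same route as the paper's: the key lemma $\Mod(\scenario)=\{\scenario\}$ established via the partition of $\Domaine^2$ by the interpretations of the base relations, followed by the union semantics of disjunction. Your additional touches---the finiteness check, the explicit appeal to a domain model of $\tau$ (which the paper leaves implicit when it concludes $\scenarioTau\not\models\varx\relAQ\vary$), and the separate handling of $\SEI=\emptyset$---only make the same argument more careful.
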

\begin{proof}
First, it is proven that
 \begin{equation}
   \text{for any $\scenario\in\INTERPRETATIONS$,}\quad
   \Mod(\scenario)=\{\scenario\}
   \label{eq:modele-scenario}
 \end{equation} 
$\scenario\in\Mod(\scenario)$ is a direct consequence of
 $\scenario\models\scenario$, thus it is sufficient to prove
 that each $\scenarioTau\in\INTERPRETATIONS$ such that $\scenarioTau\neq\scenario$
 is not a model of $\scenario$.
$\scenarioTau\neq\scenario$ implies that there exists
 $\varx, \vary\in\Variables$ and
 ${\relAQ}, {\relAQs}\in\RelationsAQBases$ with $\relAQ\neq\relAQs$
 such that $\varx\relAQ\vary$ and $\varx\relAQs\vary$ are
 respectively a constraint of $\scenario$ and of $\scenarioTau$.
Since $\interprel{\relAQ}\cap\interprel{\relAQs}=\emptyset$
 ($\interprel{\RelationsAQBases}$ being a partition of $\INTERPRETATIONS$)
 and $\scenarioTau\models\varx\relAQs\vary$,
 $\scenarioTau\not\models\varx\relAQ\vary$,
 and therefore, $\scenarioTau\not\models\scenario$,
 which proves~(\ref{eq:modele-scenario}).

From~(\ref{eq:modele-scenario}) and the semantics of $\lor$,
 it comes that
 $\Mod(\varphi)=\bigcup_{\scenario\in\SEI}\Mod(\scenario)=\bigcup_{\scenario\in\SEI}\{\scenario\} \linebreak =\SEI$,
 which proves the proposition.
\end{proof}

Every formula of $\logiqueQACP$ can be written in DNF, since
 it is a propositionally closed formalism, but the following
 proposition goes beyond that.

\begin{proposition}[normal forms]\label{prop:formesNormales}
  Let $\varphi\in\logiqueQACP$.
  $\varphi$ can be put under the following forms:
  \begin{description}
  \item[\DNFWoN form]
    $\varphi$ is equivalent to a formula in DNF using no
     negation symbol.
   \item[\DNFWoNBR form]
     $\varphi$ is equivalent to a formula in DNF using no
     negation symbol and such that its constraints contain
     only base relations.
  \end{description}
\end{proposition}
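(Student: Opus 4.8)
The plan is to obtain both forms at once from the representability result (Proposition~\ref{prop:representabilite}) applied to the set of models of $\varphi$. Because $(\logiqueQACP,\modelsCP)$ carries the semantics defined by consistent scenarios, $\Mod(\varphi)$ is simply a set of consistent scenarios, i.e. a subset of $\INTERPRETATIONS$. Taking $\SEI=\Mod(\varphi)$ in Proposition~\ref{prop:representabilite} gives
\[
  \varphi \equiv \bigvee_{\scenario\in\Mod(\varphi)}\scenario .
\]
Now each scenario $\scenario$ is, by definition, a formula under normal form all of whose constraints $\varx\relAQ\vary$ carry a base relation ${\relAQ}\in\RelationsAQBases$; hence $\scenario$ is a conjunction of positive literals, and the displayed disjunction is already a negation-free DNF whose constraints contain only base relations. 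This is exactly the stronger of the two required forms. The weaker one---negation-free DNF with arbitrary relations allowed---is then immediate, being a special case. So a single application of representability settles both items.

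If a purely syntactic, non-exponential argument is preferred, I would instead start from an ordinary DNF of $\varphi$ (which exists since $\logiqueQACP$ is propositionally closed) and remove negations literal by literal. The key equivalence is that, for any relation ${\relAQ}$, the formula $\lnot(\varx\relAQ\vary)$ has the same models as $\varx\mathrel{\overline{\relAQ}}\vary$, where $\overline{\relAQ}$ is the union of the base relations not occurring in ${\relAQ}$: indeed $\Mod(\varx\relAQ\vary)$ consists of the consistent scenarios whose $(\varx,\vary)$-relation is among the base relations of ${\relAQ}$, and its complement in $\INTERPRETATIONS$ consists of those whose $(\varx,\vary)$-relation is among the remaining base relations. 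Replacing every negative literal this way yields the negation-free form. To reach the form with base relations only, I would then use $\varx\relAQ\vary\equiv(\varx\relAQNum1\vary)\lor\ldots\lor(\varx\relAQNum{m}\vary)$ for ${\relAQ}={\relAQNum1}\ourel\ldots\ourel{\relAQNum{m}}$ and redistribute $\land$ over $\lor$ through the first equivalence of~(\ref{eq:equivalences-pour-DNF}), so that every surviving constraint bears a single base relation.

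The one point that genuinely needs care---and the main obstacle---is the degenerate case where $\varphi$ is inconsistent, i.e. $\Mod(\varphi)=\emptyset$ (equivalently an empty disjunction, or a constraint bearing $\aucunAQ$). There the displayed equivalence collapses to an empty disjunction, which is not literally of the prescribed shape. I would dispose of it by choosing any two variables $\varx,\vary\in\Variables$ and two distinct base relations ${\relAQ},{\relAQs}\in\RelationsAQBases$ (possible since $|\RelationsAQBases|\geq2$) and taking the single conjunct $(\varx\relAQ\vary)\land(\varx\relAQs\vary)$: it is inconsistent because distinct base relations have disjoint interpretations, it uses no negation, and all of its constraints carry base relations, so it represents $\varphi$ in both forms. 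Aside from this bookkeeping, the whole proof reduces to one invocation of Proposition~\ref{prop:representabilite}.
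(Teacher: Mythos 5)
Your proposal is correct, but your main argument takes a genuinely different route from the paper's, while your ``purely syntactic'' fallback is essentially the paper's own proof. The paper proceeds entirely by rewriting: starting from an ordinary DNF of $\varphi$, it replaces every negative literal $\lnot(\varx\relAQ\vary)$ by the positive literal $\varx\relAQs\vary$, where $\relAQs$ is built on the complementary set of base relations (your key equivalence), which gives the \DNFWoN form; then it splits every constraint $\varx\mathrel{{\relAQNum1}\ourel\ldots\ourel{\relAQNum{m}}}\vary$ into the disjunction of the $\varx\relAQNum{k}\vary$, treats ${\aucunAQ}$ by the same two-base-relation inconsistent conjunction you use, and re-normalizes via the equivalences~(\ref{eq:equivalences-pour-DNF}) to reach the \DNFWoNBR form. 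Your primary argument instead invokes proposition~\ref{prop:representabilite} with $\SEI=\Mod(\varphi)$, observing that $\bigvee_{\scenario\in\Mod(\varphi)}\scenario$ is already a negation-free DNF whose constraints carry only base relations; this is sound, and your handling of the inconsistent case is indeed necessary, since an empty disjunction is not a formula of $\logiqueQACP$ (your inconsistent conjunct mirrors the paper's own trick). The trade-off is this: the semantic route proves both forms in one stroke and needs no case analysis on literals, but it produces a formula that is a disjunction over \emph{all} models of $\varphi$---exponentially large and requiring a consistency test per scenario---and it loses every syntactic connection to $\varphi$; the paper's rewriting is local and effective, and it is what legitimizes the $\DNFWoN$ preprocessing step of the $\RevisionQACP$ algorithm, where one precisely wants disjuncts of $\logiqueQA$ that are \emph{not} expanded down to scenarios, the scenario search being delegated to the A* procedure inside $\RevisionQA$.
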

\begin{proof}
\subproof{\DNFWoN form.}
Let $\varphi_1$ be a formula under DNF equivalent to $\varphi$
 (it exists: cf. the section on preliminaries).
Therefore $\varphi_1$ has the form
 $\varphi_1=\bigvee_i\bigwedge_j\ell_{ij}$
 where $\ell_{ij}$ is either a constraint (positive literal)
 or the negation of a constraint (negative literal).

Let $\lnot(\varx\relAQ\vary)$ be a negative literal.
Let $R$ be the set of base relations occurring in $\relAQ$
 (if ${\relAQ}={\relAQNum1}\ourel\ldots\ourel{\relAQNum{m}}$
  then $R=\{{\relAQNum1}, \ldots, {\relAQNum{m}}\}$)
 and $\overline{R}=\RelationsAQBases\setminus{}R$.
Let $\relAQs$ be the relation based on the relations of $\overline{R}$.
Then, it comes that:
 \begin{equation*}
   \lnot(\varx\relAQ\vary)
   \quad\equiv\quad
   \varx\relAQs\vary
 \end{equation*}
 (for example, $\lnot(\varx\toutAQ\vary)\equiv\varx\aucunAQ\vary$).
Therefore every negative literal can be substituted by an equivalent
 positive literal and, by doing such substitutions on $\varphi_1$,
 the result is a formula $\varphi_2$, equivalent to $\varphi$,
 which proves that $\varphi$ can be put under \DNFWoN form.

\subproof{\DNFWoNBR form.}
First, it is proven that any constraint $\varx\relAQ\vary$
 is equivalent to a formula containing constraints based only
 on base relations (i.e. no occurrence of the symbol $\ourel$).
If ${\relAQ}={\aucunAQ}$, then $\varx\relAQ\vary$ is an inconsistent
 formula and therefore is equivalent to any
 inconsistent formula, for example
 $\varx\relAQ\vary\;\land\;\varx\relAQs\vary$
 (${\relAQ}, {\relAQs}\in\RelationsAQBases$, $\relAQ\neq\relAQs$),
 which is only based on base relations.
If ${\relAQ}\neq{\aucunAQ}$
 then ${\relAQ}={\relAQNum1}\ourel\ldots\ourel{\relAQNum{m}}$
 with $m\geq1$ and then
 \begin{equation*}
   \varx \relAQ \vary
   \quad\equiv\quad
   \varx \relAQNum1 \vary
   \;\lor\ldots\lor\;
   \varx \relAQNum{m} \vary
 \end{equation*}

Second, let $\varphi_2$ be a formula equivalent to $\varphi$ that
 is under \DNFWoN form.
By substituting in $\varphi_2$ all the constraints by equivalent
 formulas based only on base relations, the resulting formula, $\varphi_3$,
 is equivalent to $\varphi$, and contains only base relations and no
 negation.
Finally, $\varphi_3$ can be put under DNF as explained in the
 preliminaries of the paper
 (i.e. according to the set of equivalences~(\ref{eq:equivalences-pour-DNF}))
 resulting in a formula $\varphi_4$
 that is under \DNFWoNBR and which is equivalent to $\varphi$.
\end{proof}

Other authors as well stressed the interest of being able to handle 
 temporal constraints disjunctions, such as
 ``the trip takes either $5$ minutes (by car) or $15$ minutes (by bus).''
These disjunctions are generally not taken into account in the existing
 representations of qualitative relational algebras.
Some work proposed to handle disjunctions in the point algebra \citep{ViKa86}.
In \citet{gerivini95}, for instance,
 qualitative relations between intervals
 are represented by disjunctions of relations between the ends of the
 intervals---%
 e.g. ``the beginning of interval $y$ is before the beginning of interval $x$ or
  the end of $x$ is before the beginning of $y$.''
Formalisms representing temporal metric constraints are more frequent,
 following the proposition of \citet{dechter91temporal}.
In \citet{barber2000},
 disjunctions of constraints are handled using a notion of temporal context.
As far as we know, 
 none of these works has addressed the issue of propositional closure, though.

\section{Belief revision in $(\logiqueQACP, \modelsCP)$}

Given a distance function $\dist$ on $\INTERPRETATIONS$,
 a revision operator on $(\logiqueQACP, \modelsCP)$ can be
 defined according to equation~(\ref{eq:revDist}).
Indeed, proposition~\ref{prop:representabilite} implies
 that
 $\{\interpretationOmega\in\Mod(\mu)~|~\dist(\Mod(\psi), \interpretationOmega) \linebreak = \distPsiMu\}$
 is representable.

\subsection{An algorithm for computing $\revDist$ in $\logiqueQACP$}

The principle of the algorithm is based on the following proposition.

\begin{proposition}[revision of disjunctions]\label{prop:revision-disjonctions}
  Let $\psi$ and $\mu$ be two formulas of $\logiqueQACP$ and
   $\{\psi_i\}_i$ and $\{\mu_j\}_j$ be two finite
   families of $\logiqueQACP$
   such that
   $\displaystyle\psi=\bigvee_i\psi_i$ and
   $\displaystyle\mu=\bigvee_j\mu_j$.

  Let 
   $\distPsiMu_{ij}=\dist(\Mod(\psi_i), \Mod(\mu_j))$
   for any $i$ and $j$.
  Then:
  \begin{align}
    \psi\revDist\mu
    &\equiv
    \bigvee_{i, j, \distPsiMu_{ij}=\distPsiMu}
    \psi_i\revDist\mu_j
    \notag
    \\
    \text{with }
    \distPsiMu
    &=
    \dist(\Mod(\psi), \Mod(\mu))
    \notag
    \\
    \text{Moreover,}\quad
    \distPsiMu
    &=
    \min_{ij}\distPsiMu_{ij}
    \label{eq:Delta=minDeltaij}
  \end{align}
\end{proposition}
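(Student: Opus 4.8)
The plan is to translate the claimed equivalence into an equality of model sets and to reduce everything to elementary facts about infima over finite sets. Two ingredients do almost all the work: first, $\INTERPRETATIONS$ is finite, so every infimum occurring in the definition of $\dist$ is actually attained and may be written as a $\min$; second, the semantics of $\lor$ gives $\Mod(\psi)=\bigcup_i\Mod(\psi_i)$ and $\Mod(\mu)=\bigcup_j\Mod(\mu_j)$. From the union decomposition I would first record the pointwise identity, valid for every scenario $\tau\in\INTERPRETATIONS$,
\[ \dist(\Mod(\psi), \tau) \;=\; \min_i\,\dist(\Mod(\psi_i), \tau), \]
which is just the statement that the distance from $\tau$ to a finite union is the smallest of the distances to the parts.

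Equation~(\ref{eq:Delta=minDeltaij}) then follows by bookkeeping: writing $\distPsiMu=\dist(\Mod(\psi),\Mod(\mu))=\min_{\tau\in\Mod(\mu)}\dist(\Mod(\psi),\tau)$, substituting the pointwise identity, and splitting the outer minimum over $\Mod(\mu)=\bigcup_j\Mod(\mu_j)$ gives
\[ \distPsiMu \;=\; \min_{i,j}\,\min_{\tau\in\Mod(\mu_j)}\dist(\Mod(\psi_i), \tau) \;=\; \min_{i,j}\,\dist(\Mod(\psi_i),\Mod(\mu_j)) \;=\; \min_{i,j}\distPsiMu_{ij}. \]

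For the main equivalence I would prove that the two sides have the same models. By~(\ref{eq:revDist}) the left-hand side is $\{\tau\in\Mod(\mu)\mid\dist(\Mod(\psi),\tau)=\distPsiMu\}$, and by the semantics of $\lor$ together with~(\ref{eq:revDist}) the right-hand side is $\bigcup_{\distPsiMu_{ij}=\distPsiMu}\{\tau\in\Mod(\mu_j)\mid\dist(\Mod(\psi_i),\tau)=\distPsiMu_{ij}\}$. For the inclusion $\subseteq$, take $\tau$ in the left set: choose $j$ with $\tau\in\Mod(\mu_j)$ and, by the pointwise identity, an $i$ with $\dist(\Mod(\psi_i),\tau)=\distPsiMu$; then $\distPsiMu_{ij}\le\dist(\Mod(\psi_i),\tau)=\distPsiMu$ because $\tau\in\Mod(\mu_j)$, while $\distPsiMu_{ij}\ge\min_{k,l}\distPsiMu_{kl}=\distPsiMu$, so $\distPsiMu_{ij}=\distPsiMu$ and $\tau$ lies in the $(i,j)$-summand. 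For $\supseteq$, take $\tau$ in an $(i,j)$-summand with $\distPsiMu_{ij}=\distPsiMu$, so $\tau\in\Mod(\mu)$ and $\dist(\Mod(\psi_i),\tau)=\distPsiMu$; the pointwise identity gives $\dist(\Mod(\psi),\tau)\le\distPsiMu$, and for the reverse bound I note that for every index $k$, since $\tau\in\Mod(\mu_j)$, $\dist(\Mod(\psi_k),\tau)\ge\distPsiMu_{kj}\ge\distPsiMu$, hence $\dist(\Mod(\psi),\tau)=\min_k\dist(\Mod(\psi_k),\tau)\ge\distPsiMu$.

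The only delicate point, and the one I expect to be the main obstacle, is this last reverse bound in the $\supseteq$ direction: a scenario $\tau$ that is optimal for the single pair $(\psi_i,\mu_j)$ must be shown not to be \emph{accidentally} closer to some other disjunct $\psi_k$, which would otherwise pull $\dist(\Mod(\psi),\tau)$ strictly below $\distPsiMu$. This is exactly where global minimality $\distPsiMu=\min_{k,l}\distPsiMu_{kl}$ is essential: membership $\tau\in\Mod(\mu_j)$ bounds $\dist(\Mod(\psi_k),\tau)$ below by $\distPsiMu_{kj}$, which is at least the global minimum $\distPsiMu$. Everywhere, finiteness of $\INTERPRETATIONS$ is what licenses replacing infima by attained minima.
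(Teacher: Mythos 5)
Your proof is correct and follows essentially the same route as the paper's: first establish $\distPsiMu=\min_{ij}\distPsiMu_{ij}$ from the union decomposition of the model sets, then prove the equivalence as an equality of model sets by double inclusion, collapsing in each direction a chain of inequalities through the global minimality of $\distPsiMu$. The only micro-difference is that where you select a minimizing disjunct $\psi_i$ via the pointwise identity $\dist(\Mod(\psi),\tau)=\min_i\dist(\Mod(\psi_i),\tau)$, the paper instead selects a witness interpretation $\interpretationNu\in\Mod(\psi)$ attaining the distance to $\interpretationOmega$ and takes $i$ with $\interpretationNu\in\Mod(\psi_i)$; both devices feed the same sandwich argument.
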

\begin{proof}
First,~(\ref{eq:Delta=minDeltaij}) is proven:
 \begin{align*}
   \distPsiMu
   &=
   \dist(\Mod(\psi), \Mod(\mu))
   =
   \distPar{\bigcup_i\Mod(\psi_i), \bigcup_j\Mod(\mu_j)}
   \\
   &=
   \min_{ij}\dist(\Mod(\psi_i), \Mod(\mu_j))
   =
   \min_{ij}\distPsiMu_{ij}
 \end{align*}

Second, let $\interpretationOmega\in\Mod(\psi\revDist\mu)$.
Thus, there exists $\interpretationNu\in\Mod(\psi)$ such that
 $\dist(\interpretationNu, \interpretationOmega)=\distPsiMu$.
Let $i$ and $j$ be such that $\interpretationNu\in\Mod(\psi_i)$ and
 $\interpretationOmega\in\Mod(\mu_j)$.
So, the following chain of relations holds:
 \begin{equation*}
   \distPsiMu
   =
   \dist(\interpretationNu, \interpretationOmega)
   \geq
   \dist(\Mod(\psi_i), \interpretationOmega)
   \geq
   \distPsiMu_{ij}
   \geq
   \distPsiMu
 \end{equation*}
Therefore, all the numbers in this chain are equal
 and
 $\dist(\Mod(\psi_i), \interpretationOmega)=\distPsiMu_{ij}=\distPsiMu$,
 so $\interpretationOmega\in\Mod(\psi_i\revDist\mu_j)$
 for $i$ and $j$, such that $\distPsiMu_{ij}=\distPsiMu$.
To summarize, if $\interpretationOmega\in\Mod(\psi\revDist\mu)$
 then
 $\interpretationOmega\in\ModPar{\bigvee_{i, j, \distPsiMu_{ij}=\distPsiMu}\psi_i\revDist\mu_j}$.

Conversely, let $\interpretationOmega\in\Mod(\psi_i\revDist\mu_j)$
 for $i$ and $j$ such that $\distPsiMu_{ij}=\distPsiMu$.
This entails that $\dist(\Mod(\psi_i), \interpretationOmega)=\distPsiMu$,
 hence the following chain of relations:
 \begin{equation*}
   \distPsiMu
   \leq
   \dist(\Mod(\psi), \interpretationOmega)
   \leq
   \dist(\Mod(\psi_i), \interpretationOmega)
   =
   \distPsiMu
 \end{equation*}
 so $\dist(\Mod(\psi), \interpretationOmega)=\distPsiMu$ with
 $\interpretationOmega\in\Mod(\mu)$,
 consequently
 $\interpretationOmega\in\Mod(\psi\revDist\mu)$.

To conclude,
 $\interpretationOmega\in\Mod(\psi\revDist\mu)$
 iff
 $\interpretationOmega\in\ModPar{\bigvee_{i, j, \distPsiMu_{ij}=\distPsiMu}\psi_i\revDist\mu_j}$,
 which proves the proposition.
\end{proof}

The algorithm for $\revDist$ in $\logiqueQACP$ consists
 roughly in putting $\psi$ and $\mu$ in $\DNFWoN$ form
 then applying proposition~\ref{prop:revision-disjonctions}
 on them, using the $\revDist$ algorithm on $\logiqueQA$
 for computing the $\psi_i\revDist\mu_j$'s.

This requires some small modifications in the algorithm
 for $\revDist$ in $(\logiqueQA, \models)$:
 \begin{itemize}
 \item
   The revision algorithm inputs a triple $(\psi, \mu, \distmax)$
    where $\psi, \mu\in\logiqueQA$ and $\distmax$ is a non negative
    number which gives a maximal
    admissible value for $\distPsiMu=\dist(\Mod(\psi), \Mod(\mu))$.
  \item
    The search in the state space is stopped
     (and returns a ``failure symbol'') when the
     cost associated to a state is greater than $\distmax$.
  \item
    The output of the algorithm is either the failure symbol
     or a pair $(\revision, \distPsiMu)$
     where $\revision\in\logiqueQACP$ is the disjunction of scenarios
     of $\psi\revDist\mu$.
 \end{itemize}

\def\alors{\textbf{then}}
\def\commentaire#1{\strut\hfill\emph{// #1}}
\def\echec{\fm{failure}}
\def\faire{\textbf{do}}
\def\finsi{\textbf{end if}}
\def\finpour{\textbf{end for}}
\def\IND{~~~~} 
\def\pourchaque{\textbf{for each}\xspace}
\def\etchaque{\textbf{and each}\xspace}
\def\return{\textbf{return}\xspace}
\def\si{\textbf{if}\xspace}
\def\sinonsi{\textbf{else if}\xspace}
\def\resultat{\fm{result}}
\def\unerevision{\fm{rev}}
\begin{figure}
  $\RevisionQACP(\psi, \mu)$
  \begin{description}
  \item[input]
    $\psi, \mu\in\logiqueQACP$
  \item[output]
    $\revision\in\logiqueQACP$ such that $\revision\equiv\psi\revDist\mu$
  \end{description}
  \begin{enumerate}[\small1\normalsize]
  \item
    $\psi\aff\DNFWoN(\psi)$ \hfill$\psi=\bigvee_i\psi_i$ where $\psi_i\in\logiqueQA$
  \item
    $\mu\aff\DNFWoN(\mu)$   \hfill$\mu=\bigvee_j\mu_j$ where $\mu_j\in\logiqueQA$
  \item
    $\resultat\aff\emptyset$
  \item
    $\distmax\aff+\infty$
  \item
    \pourchaque $i$ \etchaque $j$ \faire
  \item
    \IND$\unerevision\aff\RevisionQA(\psi_i, \mu_j, \distmax)$
  \item
    \IND\si $\unerevision\neq\echec$ \alors
  \item
    \IND\IND$(\revision_{ij}, \distPsiMu_{ij})\aff\unerevision$
            \commentaire{$\revision_{ij}=\psi_i\revDist\mu_j\in\logiqueQACP$}\\
            \commentaire{$\distPsiMu_{ij}=\dist(\Mod(\psi_i), \Mod(\mu_j))$}
  \item
    \IND\IND\si $\distPsiMu_{ij}<\distmax$ \alors
  \item
    \IND\IND\IND$\distmax\aff\distPsiMu_{ij}$
  \item
    \IND\IND\IND$\resultat\aff\{\revision_{ij}\}$
  \item
    \IND\IND\sinonsi $\distPsiMu_{ij}=\distmax$ \alors
  \item
    \IND\IND\IND$\resultat\aff\resultat\cup\{\revision_{ij}\}$
  \item
    \IND\IND\finsi
  \item
    \IND\finsi
  \item
    \finpour
  \item
    $\revision\aff\bigvee_{\scenario\in\resultat}\scenario$
  \item
    \return $\revision$
  \end{enumerate}
\caption{Algorithm for $\revDist$ in $(\logiqueQACP, \modelsCP)$.\label{algo:rev-QACP}}
\end{figure}

The algorithm is shown in figure~\ref{algo:rev-QACP}. It
 is based on the proposition~\ref{prop:revision-disjonctions}
 and on the modified algorithm for $\revDist$ in
 $(\logiqueQA, \models)$---%
 line~6 makes use of this modified algorithm.

\subsection{\revisorPCQA: an implementation of $\revDist$ in $(\logiqueQACP, \modelsCP)$}

\paragraph{\revisor}
 is a collection of several revision engines that are open-source and
 freely available.\footnote{\url{http://revisor.loria.fr}}

In particular, \revisorQA implements $\revDist$ in three QAs:
 the Allen algebra,
 INDU\linebreak---an extension of the Allen algebra taking into account relations between intervals
 according to their lengths~\citep{pujari99indu}---and
 RCC8---a QA for representing topological relations between regions of
 space~\citep{randell92spatial}.
Moreover, it is easy to use a different qualitative algebra,
 by specifying in the code the value of ${\relAQs}\fcomp{\relAQr}$ for
 each ${\relAQr}, {\relAQs}\in\RelationsAQBases$,
 the value of $\inverserel{\relAQ}$ for each ${\relAQ}\in\RelationsAQBases$,
 and the neighborhood graph.
The engine is written in Perl,
 but can be used through a Java library.
The worst-case complexity of this implementation is of the order of
 $O\left(|\RelationsAQBases|^{\frac{|\Variables|\cdot(|\Variables|-1)}{2}}\right)$.

\paragraph{\revisorPCQA}
 implements $\revDist$ on the propositional closures of the QAs $\logiqueAllen$,
 INDU and RCC8:
 it actually uses \revisorQA
 and is one of the engines of \revisor.
The worst-case complexity of this implementation is of the order of \linebreak
 $O\left(|\Variables|^4|\RelationsAQBases|^{\frac{|\Variables|\cdot(|\Variables|-1)}{2}}\right)$,
 according to a coarse analysis.

\subsubsection{Examples}
%
The following examples have been executed using \revisorPCQA,
 and are included with the source code.
The \verb=README= file associated with \revisorQA on the \revisor
 website explains how they can be executed.

\def\Cours{\fm{Courses}}
\def\Periodes{\fm{Periods}}
\def\anglais{\fm{English}}
\def\biologie{\fm{biology}}
\def\histoire{\fm{history}}
\def\mathematiques{\fm{maths}}
\def\huitneuf{\fm{8-9}}
\def\neufdix{\fm{9-10}}
\def\dixonze{\fm{10-11}}
\def\onzedouze{\fm{11-12}}
\def\huitdouze{\fm{8-12}}
\def\Zoe{\text{Zo{\'e}}}
%
\subparagraph{The first example}
%
 aims at showing that some revision problems are more easily
 expressed in $\logiqueQACP$ than in $\logiqueQA$.
Let us consider $\Zoe$, a school principal that has to schedule a morning with $4$
 courses in biology, English, history and maths for a group of students.
For this purpose, she plans to reuse the previous year schedule:
 \begin{align*}
   \pi
   &=
   \anglais\eqAQ\huitneuf
   \;\land\;
   \biologie\eqAQ\neufdix
   \\
   \;&\land\;
   \histoire\eqAQ\dixonze
   \;\land\;
   \mathematiques\eqAQ\onzedouze
 \end{align*}
 stating, e.g., that the English course takes place from $8$ to $9$ a.m.

She also has some background knowledge that she expresses first
 in $\logiqueAllen$.
She knows the relation between the $4$ time periods:
 \begin{align*}
   \beta_1
   =
   \huitneuf\mAQ\neufdix
   \;\land\;
   \neufdix\mAQ\dixonze
   \;\land\;
   \dixonze\mAQ\onzedouze
 \end{align*}
Then, she states that every course $c_1$ has no intersection
 (except, possibly, on one of the boundaries) with another course $c_2$:
 \begin{align*}
   \beta_2
   &=
   \bigwedge_{c_1, c_2\in\Cours, c_1\neq{}c_2}
   c_1 \mathrel{{\bAQ}\ourel{\biAQ}\ourel{\mAQ}\ourel{\miAQ}} c_2
   \\
   \text{with }
   \Cours
   &=
   \{\biologie, \anglais, \histoire, \mathematiques\}
 \end{align*}
Then, she aims at representing the fact that each course corresponds to
 one of the $4$ time periods.
Since there is no disjunction in $\logiqueAllen$, she uses the following
 trick:
 asserting that each course is either equal or has no intersection
 (except on the boundaries) with any period:
 \begin{align*}
   \beta_3
   &=
   \bigwedge_{c\in\Cours, p\in\Periodes}
   c \mathrel{{\eqAQ}\ourel{\bAQ}\ourel{\biAQ}\ourel{\mAQ}\ourel{\miAQ}} p
   \\
   \text{with }
   \Periodes
   &=
   \{\huitneuf, \neufdix, \dixonze, \onzedouze\}
 \end{align*}
In order to prevent the courses and the periods to exceed the
 boundaries of the morning, the variable $\huitdouze$ is introduced
 and the following knowledge about it is asserted:
 \begin{align*}
   \beta_4
   &=
   \huitneuf\sAQ\huitdouze
   \;\land\;
   \neufdix\dAQ\huitdouze
   \;\land\;
   \dixonze\dAQ\huitdouze \\
   \;&\land\;
   \onzedouze\fAQ\huitdouze
   \;\land\;
   \bigwedge_{c\in\Cours} c \mathrel{{\sAQ}\ourel{\dAQ}\ourel{\fAQ}} \huitdouze
 \end{align*}
Let $\beta{}=\beta_1\land\beta_2\land\beta_3\land\beta_4$.
Then, the knowledge about the previous year is $\psi=\beta{}\land\pi$.
For the current year, a new constraint is that the biology and history
 teachers should not meet (for some reason):
 \begin{align*}
   \gamma
   &=
   \biologie \mathrel{{\bAQ}\ourel{\biAQ}} \histoire
 \end{align*}
Since the background knowledge has not changed, the knowledge about this
 year is $\mu=\beta{}\land\gamma$.
Thus, to propose a new schedule, $\Zoe$ will revise $\psi$ by $\mu$.
If she uses the $\revDist$ revision operator defined above, there are two
 models that consist in switching English with biology or
 history with maths.

Now, $\Zoe$ wants to formalize its knowledge in $\logiqueAllenCP$.
The previous year schedule $\pi$ and the new constraint $\gamma$ for the
 current years are kept.
What changes is the representation of background knowledge:
 $\CP{\beta}=\beta_1\land\CP{\beta_2}\land\CP{\beta_3}$,
 with $\CP{\beta_2}$ expressing the fact that two courses cannot
 occur in the same period of time
 \begin{align*}
   \CP{\beta_2}
   =
   \bigwedge_{c_1, c_2\in\Cours, c_1\neq{}c_2} \lnot(c_1\eqAQ{}c_2)
 \end{align*}
 and $\CP{\beta_3}$ expressing the fact that each course is in one
 of the $4$ periods:
 \begin{align*}
   \CP{\beta_3}
   =
   \bigwedge_{c\in\Cours}\bigvee_{p\in\Periodes} c \eqAQ p
 \end{align*}
The revision of $\CP{\psi}=\CP{\beta}\land\pi$ by
 $\CP{\mu}=\CP{\beta}\land\gamma$ also gives two models,
 corresponding to the two same course exchanges.
(Formally, they are not the same models, since the sets of
 variables are different---there is an additional variable
 in the first formalization: $\huitdouze$.)

Our claim is that the second formalization is simpler than the first
 one, which has required a ``trick''.
Furthermore, \revisorQA requires about $6$ minutes to solve
 this problem (in the first formalization)
 whereas \revisorPCQA only requires about $2$ minutes.

\def\nr{---} 
\begin{table*}[bt]
\begin{center}
 \tabcolsep=0.1cm
 \begin{tabular}{|c|c|c|c|c|c|c|c|}
 \cline{3-8}
 \multicolumn{2}{c|}{\strut}  & \multicolumn{3}{c|}{\revisorQA} & \multicolumn{3}{c|}{\revisorPCQA} \\
  \hline
 $n$ & $p$ & \#Variables & Avg distance & Avg time (s) & \#Variables & Avg distance & Avg time (s)  \\  \hline
 3   & 0 & 7  & 24.0 & 1.387    & 6  & 22.0 & 3.809    \\  \hline
 3   & 1 & 8  & 21.0 & 5.407    & 6  & 20.0 & 7.744    \\  \hline
 4   & 0 & 9  & 25.3 & 444.927  & 8  & 24.7 & 119.136  \\  \hline
 4   & 1 & 10 & 29.3 & 765.125  & 8  & 29.3 & 183.945  \\  \hline
 4   & 2 & 11 & 14.0 & 2040.551 & 8  & 14.0 & 266.667  \\  \hline
 5   & 0 & 11 & \nr  & \emph{$>$ 1 hour} & 10 & 26.0 & 3052.398 \\  \hline
 5   & 1 & 12 & \nr  & \emph{$>$ 1 hour} & 10 & \nr  & \emph{$>$ 1 hour} \\  \hline
 \end{tabular}
 \end{center}
 \caption{%
   Average distance and average time according to the problem, parametrized
    by $n$ and $p$, where $n$ is the number of courses and of time periods and
    $p$ is the number of breaks during the global time period.
   ``Avg distance'' is the average of the $\distPsiMu$ values on the set of
     revision problems generated for a given pair $(n, p)$.
   \label{tab-time}}
\end{table*}
\subparagraph{The second example}
%
 generalizes the first one.
It consists in a family of examples parametrized
 by $n$ and $p$, where $n$ is the number of courses and of
 time periods (the first example corresponds to $n=4$) and 
 $p$ is the number of breaks during the global time period 
 (the first example corresponds to $p=0$).
Moreover, the breaks in the examples are uniformally spread
 throughout the whole period.
It has been experimented with $n\in\{3, 4, 5\}$ and 
$p\in\{0, 1, 2\}$. Following similar formalizations in $\logiqueAllen$
 (with $2n+p+1$ variables) and $\logiqueAllenCP$
 (with $2n$ variables), the result were the same 
 (except for the additional variables)
 and the computing times are presented in table~\ref{tab-time}.
 The average time, for each line, is computed with series of $n-1$ tests with $d>0$, 
 on a computer with a \mes{2.53}{GHz} processor and \mes{8}{GB} of available memory. 
 For example, for $n=4$ and $p=1$, 
 the average distance is $14.0$ for \revisorQA and \revisorPCQA 
 and the average time is \mes{765.125}{s} for \revisorQA and \mes{183.945}{s} for \revisorPCQA. 
 The average time increases with the number of variables for \revisorQA and 
 for \revisorPCQA. For the same number of variables, \revisorQA is faster than \linebreak
 \revisorPCQA. However, as fewer additional variables are introduced under \linebreak
 \revisorPCQA, more complex problems can be solved with \revisorPCQA than 
 with \revisorQA.

\def\Boole{\fm{Boole}}
\def\DeMorgan{\fm{De~Morgan}}
\def\Weierstrass{\fm{Weierstra\ss}}
%
%
\subparagraph{The third example}
 uses a belief contraction operator.
As stated by equation~(\ref{eq:identite-Harper}), a contraction operator
 can be defined based on the revision operator $\revDist$.
Let $\contractionDist$ be this operator.
Now let us consider the set of beliefs $\psi$ of an agent called Maurice
 about the dates of birth and death of famous mathematicians.
Maurice thought that Boole was born after de~Morgan and died before
 him and that de~Morgan and Weierstra{\ss} were born the same year
 (say, at the same time) but the former died before the latter:
 \begin{align*}
   \psi
   =
   \Boole\dAQ\DeMorgan
   \;\land\;
   \DeMorgan\sAQ\Weierstrass
 \end{align*}
 where, $\Boole$ is the interval of time between the birth and
 the death of Boole, and so on.
Now, Germaine, a friend of Maurice, tells him that she is not sure whether
 Boole was born strictly after Weierstra{\ss}.
Since Maurice trusts Germaine (and her doubts), he wants to make the contraction of
 its original beliefs $\psi$ by $\mu$ with
 \begin{align*}
   \mu
   =
   \Boole
   \mathrel{{\biAQ} \ourel {\miAQ} \ourel {\oiAQ} \ourel {\fAQ} \ourel {\dAQ}}
   \Weierstrass
 \end{align*}
The result, computed by \revisorPCQA in less than one second, is
 $\psi\contractionDist\mu$, equivalent to the following formula:
 \begin{align*}
   &\strut\mathrel{\text{\phantom{$\lor$}}}
   (\Boole\dAQ\DeMorgan
    \;\land\;
    \DeMorgan\sAQ\Weierstrass)
    \\
   &\strut\lor
    (\Boole\sAQ\Weierstrass
     \;\land\;
     \DeMorgan\diAQ\Weierstrass)
     \\
   &\strut\lor
    (\Boole\sAQ\DeMorgan
     \;\land\;
     \DeMorgan\sAQ\Weierstrass
     )
     \\
   &\strut\lor
    \left(\!\!\!\!\mlc{$\Boole\dAQ\DeMorgan
                        \;\land\;
                        \Boole\sAQ\Weierstrass$ \\
                       $\;\land\;
                        \DeMorgan\oAQ\Weierstrass$}\!\!\!\!\right)
 \end{align*}
Actually, the last term of this disjunction corresponds to the
 reality, provided that the intervals of time correspond to a
 year granularity.\footnote{%
   George Boole (1815-1864),
   Augustus De~Morgan (1806-1871),
   Karl Weierstra{\ss} (1815-1897).}

\section{Conclusion}

This paper has presented an algorithm for distance-based belief revision in the
 propositional closure $\logiqueQACP$ of a qualitative algebra $\logiqueQA$,
 using the revision operation on $\logiqueQA$.
This work is motivated by the fact that it gives a revision operation whose
 result is representable in the formalism,
 by the fact that some practical examples are easily represented in $\logiqueQACP$
 whereas they are quite difficult to represent in $\logiqueQA$,
 and by the fact that it makes it possible to define a contraction operator thanks to the
 Harper identity (which requires disjunction and negation).
The preprocessing of the algorithm consists in putting the formulas into
 a disjunctive normal form without negation.
Then, proposition~\ref{prop:revision-disjonctions},
 which reduces a revision of disjunctions to a disjunction
 of the least costly revisions, is applied.
\revisorPCQA is an implementation of this revision operator for
 the Allen algebra, INDU and RCC8.

A first direction of research following this work is the improvement of the
 computation time of the \revisorPCQA system.
One way to do it is to parallelize it, which should not be very difficult
 (parallelizing the main loop).
A sequential optimization would consist in finding a heuristic for
 ranking the pairs $(i, j)$, with the aim of starting from the best candidates,
 in order to obtain a low upper bound $\distmax$ sooner.

The approach depicted in this paper for an algorithm of $\revDist$ in
 $\logiqueQACP$ built using an algorithm of $\revDist$ in $\logiqueQA$
 has actually little dependence on the peculiarities of QAs
 (except for the fact that negations can be removed in $\logiqueQACP$
  according to proposition~\ref{prop:formesNormales}).
Indeed, it could be reused as such for designing an algorithm
 of a revision on the disjunctive closure of a formalism $\logique$,
 provided that an algorithm of $\revDist$ has been designed in $\logique$.
For example, the \revisorCLC system has been implemented in the formalism
 $\logiqueCLC$ of conjunction of linear constraints (on integers and real
 numbers), with a city block distance~\citep{DBLP:conf/ewcbr/CojanL08}.
However, reusing this approach for having an algorithm of $\revDist$ in a
 \emph{propositional} closure $\logiqueCP$ raises additional issues.
In particular, the minimal distance between sets of models
 (i.e. $n$-tuples of numbers) is not necessarily reached, thus
 violating the postulate ($\rev$3).
Working on this issue is a second direction of research.

This paper has described an algorithm for belief revision in $\logiqueQACP$,
 which can be straightforwardly used for belief contraction.
The third direction of research is to study how other belief change
 operations can be implemented in this formalism,
 in particular belief merging~\citep{konieczny02}
 and knowledge update~\citep{katsuno-mendelzon:1991a}.

\section*{Acknowledgments}

The authors would like to express their sincere gratitude to the reviewers
 of the version of this article that was submitted to KR~2014.
Those of their suggestions which were not addressed in the KR version of this article for want of space
 are addressed in this technical report.
 
This research was partially funded by the project Kolflow\footnote{%
  \url{http://kolflow.univ-nantes.fr}}
 of the French National Agency for Research (ANR), program ANR CONTINT.

\bibliography{biblio}

\end{document}